\newtheorem{theorem}{Theorem}
\newtheorem{assumption}{Assumption}
\title{SOLID: a Framework of Synergizing Optimization and LLMs for Intelligent Decision-Making}
\author{Yinsheng Wang\thanks{Department of Industrial \& Systems Engineering, University of Washington, \texttt{yinshw@uw.edu}}
\and Tario G You\thanks{College of Engineering, University of Washington, \texttt{tariogyou@gmail.com}}
\and Léonard Boussioux\thanks{Michael G. Foster School of Business, University of Washington, \texttt{leobix@uw.edu}}
\and Shan Liu\thanks{Department of Industrial \& Systems Engineering, University of Washington, \texttt{shanliu@uw.edu}}
}
\begin{document}

\maketitle

\begin{abstract}
This paper introduces SOLID (Synergizing Optimization and Large Language Models for Intelligent Decision-Making), a novel framework that integrates mathematical optimization with the contextual capabilities of large language models (LLMs). SOLID facilitates iterative collaboration between optimization and LLMs agents through dual prices and deviation penalties. This interaction improves the quality of the decisions while maintaining modularity and data privacy. The framework retains theoretical convergence guarantees under convexity assumptions, providing insight into the design of LLMs prompt. To evaluate SOLID, we applied it to a stock portfolio investment case with historical prices and financial news as inputs. Empirical results demonstrate convergence under various scenarios and indicate improved annualized returns compared to a baseline optimizer-only method, validating the synergy of the two agents. SOLID offers a promising framework for advancing automated and intelligent decision-making across diverse domains.
\end{abstract}

\section{Introduction}

Mathematical optimization modeling plays a fundamental role in modern decision-making processes, offering rigorous frameworks for complex problem-solving across diverse domains including financial portfolio management, supply chain operations, and healthcare resource allocation. While these models excel at processing quantitative inputs and structured data, they face inherent limitations in directly assimilating unstructured information such as clinical narratives, financial market commentary, and expert analytical reports. Traditional approaches typically convert unstructured input into quantifiable embeddings for downstream predictive models, but often risk compromising decision quality where nuanced contextual understanding is essential \citep{carballo2022tabtext, patil2023survey}.

Large language models (LLMs) have emerged as powerful tools that excel in processing and interpreting vast amounts of contextual and unstructured information, though they inherently lack precise numerical optimization capabilities. LLMs demonstrate considerable potential as decision-making agents, particularly in contexts requiring natural language understanding and complex problem-solving \citep{wasserkrug2024combining, bommasani2021opportunities}. However, they face significant limitations when attempting structured symbolic reasoning and optimization at real-world scale \citep{yang2024large}, with responses often based on statistical pattern matching rather than true mathematical reasoning, leading to inconsistent results when dealing with complex constraints or large solution spaces \citep{imani2023mathprompter}.

Given the complementary strengths of optimization models and LLMs, a natural research question arises: \textit{Can we develop an automated pipeline that effectively integrates decision-making processes from optimization models and LLMs to achieve improved outcomes?} This work proposes SOLID (\textbf{S}ynergizing \textbf{O}ptimization and \textbf{L}LMs for \textbf{I}ntelligent \textbf{D}ecision-Making), which combines the quantitative decision-making strengths of optimization models with the context-awareness capabilities of LLMs. The framework draws inspiration from the alternating direction method of multipliers (ADMM) \citep{boyd2004convex} for inter-agent coordination through dual pricing adjustments and decision deviation penalties.

Our work makes three key contributions: First, we develop a novel framework that combines the numerical precision of optimization models with the contextual understanding and reasoning of LLMs, enabling effective decision-making across structured and unstructured data. Second, we establish a coordinated interaction mechanism, inspired by dual decomposition that facilitates iterative collaboration between optimization and LLM agents through price signals and deviation penalties, ensuring convergence to consensus solutions under appropriate conditions. Third, we demonstrate the practical efficacy of our approach through portfolio optimization experiments that incorporate alternative data sources, showing improved risk-adjusted returns compared to single-agent baselines while maintaining theoretical guarantees where applicable.

\begin{figure*}[htb!]
\centering
\includegraphics[width=0.8\textwidth]{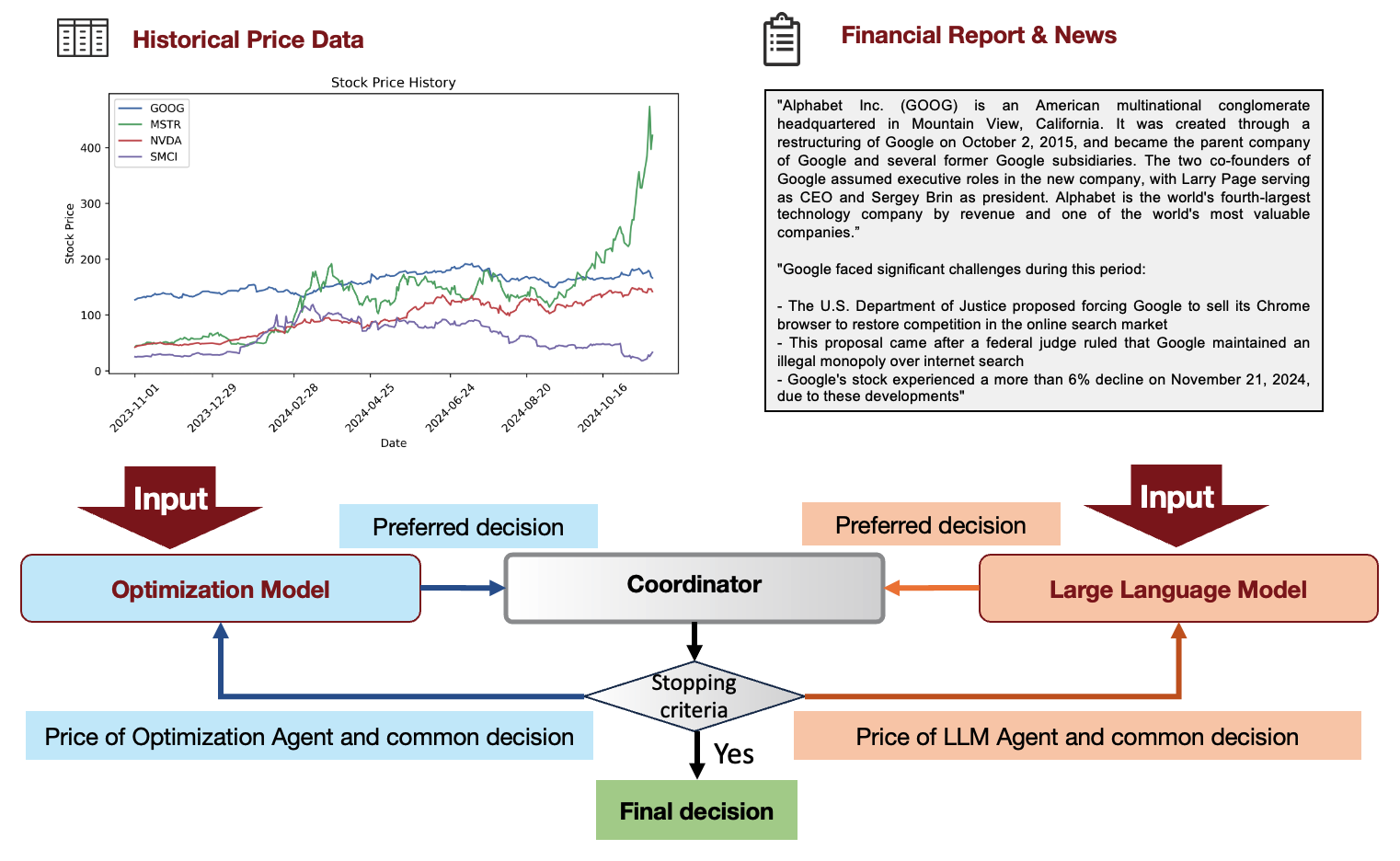}
\caption{Given a decision query from a user, our framework aims to perform optimal decision-making by leveraging the advantages of both the optimization model and the LLM model. We first illustrate the decision-making task by financial investment.}
\label{fig_framework}
\end{figure*}

\section{Framework of SOLID}\label{sec3}

Our framework draws inspiration from ADMM, a distributed convex optimization algorithm that coordinates solutions to smaller sub-problems to solve larger global issues. ADMM decomposes the problem into local variables with consensus constraints, using dual prices and quadratic penalties to achieve coordination \citep{boyd2004convex}. In SOLID, a coordinator compensates or charges optimization and LLM agents for their realized activity values through dual pricing in each iteration. As illustrated in Figure \ref{fig_framework}, at each iteration, the coordinator transmits the current public plan $x^{(k)}$ and dual prices $\lambda_{llm}^{(k)}$, $\lambda_{opt}^{(k)}$ to each agent. This information creates augmented utility functions that pay agents for activities while penalizing deviations from consensus. Each agent proposes preferred decisions maximizing their augmented utility, and the coordinator performs reconciliation to produce a new common decision. This process repeats until convergence. The framework is formalized in Algorithm \ref{alg_solid}.

\begin{algorithm}[htb!]
\caption{Framework of SOLID}
\label{alg_solid}
\begin{algorithmic}
\STATE {\bfseries Input:} Consistency set $\chi$, max number of iterations $K$, step size $\rho$, $k = 1$.
\STATE {\bfseries Initialization:} Initial public decision variables $x^{(0)}$ and dual variables $\lambda_{llm}^{(0)}$ and $\lambda_{opt}^{(0)}$.
\REPEAT
    \STATE \textbf{Optimization agent} updates its preferred plans $\tilde{x}_{opt}^{(k)}$ by maximizing its augmented utility function:
    \begin{align}
    \tilde{x}_{opt}^{(k)} = \arg \max_{x \in \mathcal{X}}\left\{u_{opt}(x)+x^{\top} \lambda_{opt}^{(k-1)}-\frac{\rho}{2}\left\|x-x^{(k-1)}\right\|^2\right\}
    \end{align}
    \STATE \textbf{LLM agent} functions as a decision-making agent and updates its preferred plans $\tilde{x}_{llm}^{(k)}$:
    \begin{align}
    \tilde{x}_{llm}^{(k)} = \arg \max_{x \in \mathcal{X}}\left\{u_{llm}(x, \lambda_{llm}^{(k-1)}, x^{(k-1)})\right\}
    \end{align}
    \STATE \textbf{Coordinator} updates activity prices:
    \begin{align}
    \lambda_{agent}^{(k)} &= \lambda_{agent}^{(k-1)} - \rho \left( \tilde{x}_{agent}^{(k)} - x^{(k-1)} \right) \quad agent \in \{llm, opt\}
    \end{align}
    \STATE \textbf{Coordinator} updates public decision variables as follows:
    \begin{align}
    x^{(k)} = \arg \min_{x \in \mathcal{X}} \left\{ x^{\top} \left(\lambda_{opt}^{(k)} + \lambda_{llm}^{(k)}\right) + \frac{\rho}{2} \left( \left\|\tilde{x}_{opt}^{(k)} - x\right\|^2 + \left\|\tilde{x}_{llm}^{(k)} - x\right\|^2 \right) \right\}
    \end{align}
    \STATE $k = k + 1$
\UNTIL{$k = K$ {\bfseries or} primal/dual residuals are small enough}
\STATE {\bfseries Output:} $x^{(k)}$, $\lambda_{llm}^{(k)}$, $\lambda_{opt}^{(k)}$.
\end{algorithmic}
\end{algorithm}

SOLID provides two key advantages. First, it enables automated decision-making by systematically coordinating optimization models with LLMs through price signals, thereby eliminating the need for manual intervention. Second, the modular design allows independent development of each component on separate datasets, preserving data privacy while enabling collaborative decision-making.

\paragraph{Theoretical Property and Considerations} SOLID integrates mathematical optimization with LLMs through an iterative mechanism where each component operates as an independent decision-maker. Under standard convexity assumptions (proper, closed, convex functions; feasible problem; nonempty minimizer set), Algorithm \ref{alg_solid} converges to optimal solutions. However, real-world systems often violate convexity assumptions due to discrete variables or non-convex objectives.

The introduction of LLM agents raises similar challenges. LLMs struggle with numerical calculations but excel at semantic descriptions and ranking tasks \citep{avnat2024performance}. To address resulting non-convexity, we adopt higher-level abstractions providing accurate surrogates for continuous decisions. SOLID equips LLM agents with economic concepts to enhance reasoning: dual prices $\lambda_{llm}$ represent economic costs of preferred decisions, while current public variables impose deviation penalties, ensuring alignment and feasibility.

\section{Case Study: Portfolio Optimization with Alternative Data}\label{sec5}

We demonstrate SOLID's effectiveness through portfolio optimization integrating historical prices with qualitative market intelligence from financial news, combining quantitative and unstructured data sources for improved investment strategies.

\subsection{Experimental Setup}

\paragraph{Optimization Agent} We adopt the classic Markowitz Mean-Variance Portfolio Optimization model as the optimization agent, minimizing portfolio risk for a given target return. The optimization problem determines optimal portfolio weights $\mathbf{w} = [w_1, w_2, \dots, w_n]^{\prime}$: \(\min _{\mathbf{w}} \frac{1}{2} \mathbf{w}^{\prime} \boldsymbol{\Sigma} \mathbf{w} \quad \text{s.t.} \quad \mathbf{w}^{\prime} \mu = p, \quad \mathbf{w}^{\prime} \mathbf{1} = 1,\) where $\boldsymbol{\Sigma}$ is the covariance matrix, $\mu$ is the expected returns vector, and $p$ is the target return.

\paragraph{LLM Agent} The LLM considers recent news and stock prices, constructed through prompt engineering to make informed decisions. We discretize LLM decisions into semantic levels: "Very High", "High", "Somewhat High", "Neutral", "Somewhat Low", "Low", and "Very Low", mapping to numerical values from 0.6 to 0 with 0.1 increments.

We selected 60 NASDAQ stocks spanning 10 industries and conducted month-by-month portfolio adjustments throughout 2024. We compare five strategies: pure optimization (\textbf{OPT}), pure LLM (\textbf{LLM}), SOLID framework (\textbf{LLM+OPT}), simple averaging (\textbf{AVG}), and sparse versions enforcing sparsity in LLM allocations. Our primary metrics are portfolio value tracking and risk (total variance) monitoring. We primarily use ChatGPT-4o-mini with comparisons across GPT-4o and o1-mini.

\subsection{Results}

\begin{figure*}[htb!]
\centering
\includegraphics[width=0.9\textwidth]{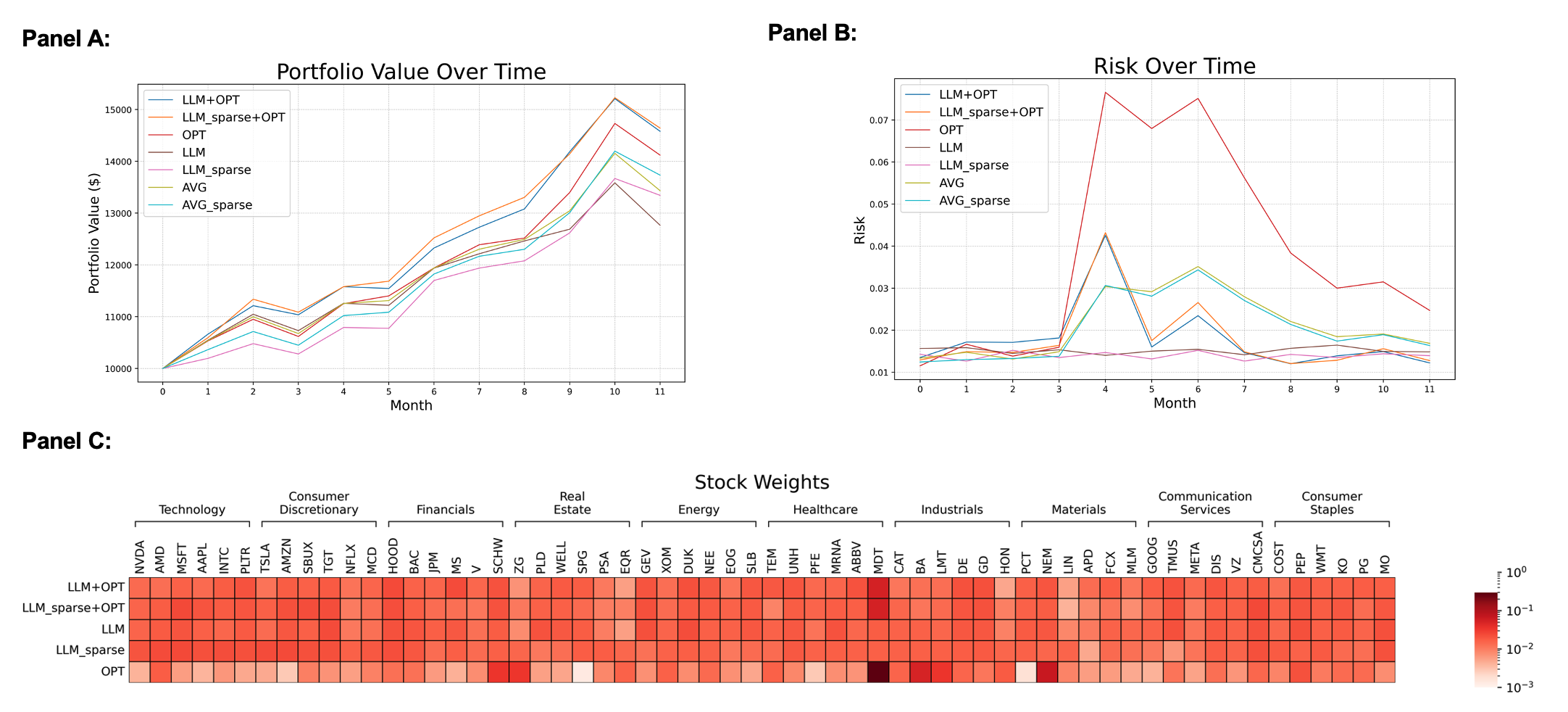}
\caption{Panel A: the total portfolio value change by month; Panel B: Risk evaluation by month; Panel C: average stock weights under 5 strategies by month.}
\label{fig_portfolio}
\end{figure*}

\paragraph{Portfolio Performance} Figure \ref{fig_portfolio} Panel A shows portfolio value evolution from \$10,000 over 12 months. SOLID variants (LLM+OPT and LLM\_sparse+OPT) consistently achieve the highest portfolio values, followed by OPT, while pure LLM strategies yield more moderate returns. Panel B demonstrates that OPT exhibits pronounced volatility spikes around months 4-5, while LLM\_sparse maintains consistently lower volatility. SOLID strategies (LLM+OPT, LLM\_sparse+OPT) achieve intermediate risk levels, effectively balancing return potential with risk control. Panel C's heatmap reveals that OPT frequently exhibits extreme allocations to specific stocks (e.g., MDT), corresponding to elevated risk levels. LLM-based approaches display more diversified allocations with stock weights distributed more evenly.

\paragraph{Agent Convergence} Figure \ref{fig_converge} demonstrates SOLID's coordination process with the AMZN stock example. The framework achieves exact convergence at certain time points (months 1 and 7), while in other instances, decisions progressively align, indicating a tendency toward agreement despite LLM complexity.

\paragraph{Model Comparisons} Across ChatGPT-4o-mini, 4o, and o1-mini, SOLID pipelines consistently achieve optimal risk-return balance. With 4o-mini, both LLM+OPT and LLM\_sparse+OPT outperform other methods. With 4o, LLM\_sparse+OPT achieves exceptionally high returns. The results validate SOLID's robustness across different LLM architectures.

These results demonstrate that SOLID enhances portfolio performance relative to purely LLM-based strategies while mitigating excessive risk associated with pure optimization, validating the synergy between optimization precision and LLM contextual understanding.

\section{Conclusions}\label{sec6}
We propose SOLID, a decision-making process that integrates optimization models and LLMs within a distributed optimization framework. Empirical results provide insights into designing effective prompting strategies for LLMs. A case study in finance highlights the advantages of our approach compared to baseline methods, including using only optimization models or LLMs. The modular and interpretable structure of SOLID makes it adaptable to a wide range of decision-making systems in critical domains.

\bibliographystyle{plainnat}  
\bibliography{references}     

\appendix
\onecolumn

\section{Related Work}\label{sec2}

Our work sits at the intersection of optimization modeling, the emerging research of the LLM's reasoning capabilities, and the combinations of optimization and LLMs for enhanced decision-making. 

\paragraph{Optimization Modeling:} Optimization is ubiquitous in modern business analytics, enabling organizations to find optimal solutions to complex problems by systematically evaluating scenarios and constraints \citep{silva2021business}. Unlike predictive analytics, optimization recommends specific actions to achieve desired outcomes. Techniques such as linear programming \cite{dantzig2002linear}, integer programming \cite{wolsey2007mixed}, and non-linear programming \cite{luenberger1984linear} are widely applied to resource allocation \cite{tang2023learning}, production planning \cite{missbauer2011optimization}, and distribution management, offering actionable insights that enhance efficiency, reduce costs, and improve decision-making across industries.

\paragraph{LLM as a Decision-Making Agent:} Decision-making with LLMs has recently gained traction \citep{ahmaditeshnizi2024optimus, wasserkrug2024large, wasserkrug2024combining}. Chain-of-Thought (CoT) prompting \citep{wei2022chain} initiated a wave of research that deconstructs complex multistep reasoning challenges, including decision-making, into modular sub-problems. For example, Tree-of-Thought (ToT) prompting \citep{yao2024tree, long2023large} expands on CoT by incorporating a tree-search approach, optimizing reasoning paths based on external feedback. Subsequent research has refined ToT through improved search algorithms, self-induced feedback, and tool integration \citep{ye2023large, hao2023reasoning}. Recent work known as DeLLMa introduces a framework for decision-making under uncertainty \citep{liu2024dellma}, revealing the inherent challenges even when carefully structured chains attempt to replicate classical decision-making processes. However, this body of work has not yet explored the combined advantages of prescriptive models and LLMs for decision-making.

\paragraph{Combining Optimization and LLMs:}

Recent studies on the integration of LLMs and optimization algorithms include at least two primary approaches. The first is leveraging LLMs to formulate or solve optimization problems. For example, OptiMUS has been developed to translate natural language descriptions into mixed-integer linear programming (MILP) formulations \cite{ahmaditeshnizi2024optimus}, and LLMs have also been explored as black-box solvers for optimization tasks \cite{brahmachary2024large}. The second approach involves applying optimization techniques to improve the performance of LLMs. This includes methods such as structural pruning and neural architecture search, as demonstrated in AutoML-GPT \cite{zhang2023automl}. ReEvo \cite{ye2024reevo}, PromptBreeder \cite{fernando2023promptbreeder} and Optimization by Prompting (OPRO) \cite{yang2024large}, aim to automate the process of prompt optimization. Recent DeepSeek-R1 \cite{deepseekai2025deepseekr1incentivizingreasoningcapability} uses large-scale reinforcement learning to improve reasoning performance, and achieves performance comparable to OpenAI-o1 \cite{openai_learning_to_reason_2024} on reasoning tasks. These advancements highlight the growing importance of combining LLMs and optimization techniques to address both computational and practical challenges.

\section{Details of ADMM Algorithm}

Our framework draws inspiration from ADMM. ADMM is a simple but powerful algorithm that is particularly well suited to distributed convex optimization. It functions as a decomposition-coordination method, coordinating solutions to smaller, localized sub-problems to find a solution for a larger, global issue. Here we describe a general formulation of the problem \citep{boyd2004convex}.

We first consider the case with a single global variable, with the objective split into \(N\) parts: \(f(x)=\sum_{i=1}^N f_i(x),\)
where \(x \in \mathbf{R}^n\), and \(f_i: \mathbf{R}^n \rightarrow \mathbf{R} \cup\{+\infty\}\) are convex. We refer to \(f_i\) as the \(i\) th term in the objective. Each term can also encode constraints by assigning \(f_i(x)=+\infty\) when a constraint is violated. The goal is to solve the above problem in such a way that each term can be handled separately.

This problem can be rewritten with local variables \(x_i \in \mathbf{R}^n\) and a common global variable \(z\) :
\[
\begin{array}{ll}
\operatorname{minimize} & \sum_{i=1}^N f_i\left(x_i\right) \\
\text { subject to } & x_i-z=0, \quad i=1, \ldots, N
\end{array}
\]

The resulting ADMM algorithm is the following:
\[
\begin{aligned}
x_i^{k+1} & :=\underset{x_i}{\operatorname{argmin}}\left(f_i\left(x_i\right)+y_i^{k T}\left(x_i-z^k\right)+\frac{\rho}{2}\left\|x_i-z^k\right\|_2^2\right) \\
z^{k+1} & :=\frac{1}{N} \sum_{i=1}^N\left(x_i^{k+1}+(1 / \rho) y_i^k\right) \\
y_i^{k+1} & :=y_i^k+\rho\left(x_i^{k+1}-z^{k+1}\right) .
\end{aligned}
\]

\section{Theoretical Property Elaborations}

The foundational concept of SOLID involves the integration of a mathematical optimization model with LLMs through an iterative mechanism wherein each component operates as an independent decision-maker. Within each iteration, the problem presented to the decision-maker is modified to incorporate a bias that assigns weight to the "joint" decision derived from the decisions of both entities. This methodology draws inspiration from the ADMM. In this section, our focus is on evaluating whether any formal characteristics of ADMM are helpful to guide us on improving the performance of SOLID framework. Moreover, given that LLMs are often heuristic and ad hoc in nature, it is important to maintain transparency regarding the roles played by ADMM and each agent.

Let \(u_{opt}(x)\) be the utility function of the optimization agent and \(u_{opt}(z)\) be the utility function of the LLM agent. Consider the convex optimization problem
\begin{equation}
\begin{aligned}
\min_{x,z}\quad & f(x) + g(z) \quad\text{s.t.}\quad & x = z,
\end{aligned}
\label{opt_formulation}
\end{equation}

where we identify 
\(f(x) := u_{\mathrm{opt}}(x)\), and \(g(z) := u_{\mathrm{llm}}(z)\).

It can be shown that the Algorithm~\ref{alg_solid} converges to a solution of this problem under the following standard assumptions.

\begin{assumption}\label{assumption:1}
\begin{enumerate}
\item[(A1)] The functions $f$ and $g$ are proper, closed, and convex.
\item[(A2)] The problem is \emph{feasible}, i.e., there exists $(x,z)$ with $x=z$ that achieves a finite value of $f(x) + g(z)$.
\item[(A3)] The set of minimizers is nonempty and at least one optimal solution $(x^\star,z^\star)$ exists.
\end{enumerate}
\end{assumption}

We rewrite the constraint $x = z$ as $x - z = 0$, and let $\lambda$ be the dual variable (Lagrange multiplier). The augmented Lagrangian of this problem is
\[
\mathcal{L}_{\rho}(x,z,\lambda)=
f(x) + g(z) 
+ \langle \lambda, x - z \rangle
+ \frac{\rho}{2}\|x - z\|^2,
\]
where $\rho > 0$ is a penalty parameter.

The ADMM iteration updates $(x^k,z^k,\lambda^k)$ as follows:
\begin{align*}
x^{k+1} 
&= \arg\min_{x} 
\bigl\{ 
   f(x) + \langle \lambda^k,\, x - z^k \rangle 
+ \tfrac{\rho}{2}\|x - z^k\|^2 
\bigr\},
\\
z^{k+1} 
&= \arg\min_{z} 
\bigl\{
   g(z) 
- \langle \lambda^k,\, z - x^{k+1} \rangle 
+ \tfrac{\rho}{2}\|x^{k+1} - z\|^2
\bigr\},
\\
\lambda^{k+1}
&= 
\lambda^k +
\rho \bigl( x^{k+1} - z^{k+1} \bigr).
\end{align*}

\begin{theorem}[Convergence of ADMM]\label{thm:ADMM}
Suppose Assumption~\ref{assumption:1} holds. Then any sequence $\{(x^k,z^k,\lambda^k)\}$ produced by the ADMM iteration above is bounded (or has a convergent subsequence). Moreover, every limit point $(x^\star,z^\star,\lambda^\star)$ of this sequence satisfies: \textbf{1. Primal feasibility:} $x^\star = z^\star$; \textbf{2. Dual feasibility (KKT condition):}
\(-\lambda^\star \;\in\;\partial f(x^\star), \quad \lambda^\star \;\in\;\partial g(z^\star).\)

Hence, $(x^\star,z^\star)$ is an optimal solution of the original problem, and $\lambda^\star$ is an optimal dual variable.
\end{theorem}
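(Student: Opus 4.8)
The plan is to follow the classical Lyapunov-function argument for ADMM convergence (in the spirit of Boyd et al.). First I would fix an optimal primal--dual triple $(x^\star,z^\star,\lambda^\star)$, which exists by (A3) together with strong duality for the linearly constrained convex program \eqref{opt_formulation}; this triple satisfies the KKT system $x^\star = z^\star$, $-\lambda^\star \in \partial f(x^\star)$, and $\lambda^\star \in \partial g(z^\star)$. Writing $r^{k} := x^{k}-z^{k}$ for the primal residual, the goal is to show $r^k \to 0$ and $z^{k}-z^{k-1} \to 0$, and then that every limit point of $\{(x^k,z^k,\lambda^k)\}$ solves this KKT system.

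Next I would record the first-order optimality conditions for the two subproblems. Since $x^{k+1}$ minimizes $f(x) + \langle \lambda^k, x - z^k\rangle + \tfrac{\rho}{2}\|x-z^k\|^2$, convexity gives $-\lambda^k - \rho(x^{k+1}-z^k) \in \partial f(x^{k+1})$; substituting the dual update $\lambda^{k+1} = \lambda^k + \rho r^{k+1}$ rewrites this as $-\lambda^{k+1} + \rho(z^{k+1}-z^k) \in \partial f(x^{k+1})$. The analogous computation for the $z$-update gives the clean inclusion $\lambda^{k+1} \in \partial g(z^{k+1})$. These two relations are ``almost-KKT'' identities that become exact in the limit once the residuals vanish.

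The heart of the proof is the merit function
\[
V^k \;=\; \tfrac{1}{\rho}\,\|\lambda^k - \lambda^\star\|^2 \;+\; \rho\,\|z^k - z^\star\|^2,
\]
for which I would establish the descent inequality
\[
V^{k+1} \;\le\; V^k \;-\; \rho\,\|r^{k+1}\|^2 \;-\; \rho\,\|z^{k+1}-z^k\|^2 .
\]
This is the step I expect to be the main obstacle. It is obtained by combining the subgradient inequality for $f$ at $x^{k+1}$ (tested against $x^\star$) with the one for $g$ at $z^{k+1}$ (tested against $z^\star$) --- equivalently, by exploiting monotonicity of $\partial f$ and $\partial g$ --- and then reorganizing the resulting quadratic cross-terms by completion of squares, using the identities $\lambda^{k+1}-\lambda^k = \rho r^{k+1}$, $x^{k+1}-x^\star = r^{k+1} + (z^{k+1}-z^\star)$, and $x^\star = z^\star$. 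The individual manipulations are routine, but getting every sign and the coefficient $\rho$ to line up correctly is the delicate part.

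Finally, from the descent inequality the conclusion follows quickly. Since $V^k \ge 0$ is nonincreasing it converges; telescoping then yields $\sum_k \big(\|r^{k+1}\|^2 + \|z^{k+1}-z^k\|^2\big) < \infty$, hence $r^k \to 0$ and $z^{k}-z^{k-1}\to 0$. Boundedness of $\{V^k\}$ gives boundedness of $\{(\lambda^k,z^k)\}$, and then of $\{x^k\}$ via $x^k = z^k + r^k$, so by Bolzano--Weierstrass some subsequence converges to a limit point $(\bar x,\bar z,\bar\lambda)$, and $r^k\to 0$ forces $\bar x = \bar z$ (primal feasibility). Passing to the limit along this subsequence in the two inclusions from the second step --- using $z^{k+1}-z^k \to 0$ and the closedness of the graphs of $\partial f$ and $\partial g$ under (A1) --- yields $-\bar\lambda \in \partial f(\bar x)$ and $\bar\lambda \in \partial g(\bar z)$, i.e.\ the full KKT system, so by convexity $(\bar x,\bar z)$ is optimal for \eqref{opt_formulation} and $\bar\lambda$ is an optimal multiplier. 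One may further observe that $V^k$ converges and equals $0$ along the subsequence, hence $V^k \to 0$ and the entire sequence converges to $(\bar x,\bar z,\bar\lambda)$.
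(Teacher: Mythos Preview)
Your proposal is correct and follows the same classical Lyapunov-function route (after Boyd et al.) that the paper's sketch defers to: the paper merely gestures at a ``standard boundedness argument on the augmented Lagrangian'' and passage to the limit in the subproblem stationarity conditions, whereas you make the merit function $V^k=\tfrac{1}{\rho}\|\lambda^k-\lambda^\star\|^2+\rho\|z^k-z^\star\|^2$ and its descent inequality explicit. Your plan is in fact more detailed than the paper's own sketch; just be careful with the sign in the rewritten $x$-optimality condition (it should read $-\lambda^{k+1}-\rho(z^{k+1}-z^k)\in\partial f(x^{k+1})$), exactly the kind of bookkeeping you already flagged as delicate.
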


\begin{proof}[Sketch of Proof]
We outline the standard steps; detailed arguments can be found in reference on ADMM \cite{boyd2011distributed}. Consider the problem \ref{opt_formulation} with $f$ and $g$ proper, closed, convex, and at least one optimal solution $(x^\star,z^\star)$ satisfying $x^\star=z^\star$.  The ADMM algorithm forms the augmented Lagrangian 
\(
\mathcal{L}_{\rho}(x,z,\lambda) 
= f(x) + g(z) + \langle \lambda,\,x-z\rangle + \tfrac{\rho}{2}\|x-z\|^2
\)
and iteratively updates $x,z,\lambda$ by separately minimizing over $x$ and $z$ (with quadratic regularization) and then performing a gradient ascent step on $\lambda$. Since each subproblem is strictly convex and $f$ and $g$ are closed, each update is well-defined.  A standard boundedness argument on the augmented Lagrangian ensures the sequences $(x^k,z^k,\lambda^k)$ are contained in a region from which a convergent subsequence can be extracted. Passing to the limit in the stationarity conditions for $x$ and $z$ and using the dual update $\lambda^{k+1} = \lambda^k + \rho(x^{k+1}-z^{k+1})$ shows that the primal residual $x^{k+1}-z^{k+1}$ and the dual residual converge to zero, yielding $x^\star=z^\star$ and subgradient conditions $-\lambda^\star\in\partial f(x^\star)$, $\lambda^\star\in\partial g(z^\star)$. These conditions are precisely the KKT conditions for the original problem, so $(x^\star,z^\star)$ is optimal, and thus ADMM converges to a solution.
\end{proof}

The proof relies strictly on Assumption~\ref{assumption:1}. However, real-world systems often violate these assumptions. While conditions (A2) and (A3) in Assumption~\ref{assumption:1} are typically straightforward to satisfy—assuming the existence of a feasible solution that achieves the optimal objective—many problems remain non-convex. This non-convexity may stem from either a non-convex feasible region (e.g., involving discrete decision variables) or a non-convex objective function. In such cases, numerical experiments reveal that if one or more agents exhibit non-convex behavior, convergence cannot always be guaranteed. Many of the related studies \cite{alavian2017improving, wang2019global, diamond2018general} address this by relaxing the assumptions and establishing new convergence guarantees. A common approach is to solve a convexified version of the problem during interaction with the coordinator, subsequently adjusting the solution to meet feasibility conditions.

In the SOLID framework, the introduction of LLM agents raises similar challenges. First, LLMs, which rely on probabilistic patterns, struggle with numerical calculations. However, they excel at interpreting semantic descriptions such as ``high", ``medium," and ``low" \cite{avnat2024performance}, and they perform well in ranking tasks. In the SOLID framework, such semantic descriptions correspond to discrete decision variables. To mitigate the resulting non-convexity, we adopt a higher-level abstraction that provides a more accurate surrogate for continuous decisions. Second, LLMs have demonstrated optimization potential. For example, studies \cite{guo2023towards, yang2024large} have shown that LLMs can deliver good-quality solutions to classic optimization problems, such as linear regression and the traveling salesman problem, especially in small scales.

The SOLID framework equips LLM agents with concise economic concepts to enhance reasoning. Similarly to the augmented utility function used by optimization agents - which includes a dual price $\lambda_{opt}$ and the current public decision variables - we provide LLM agents with their dual price $\lambda_{llm}$. This dual price represents the economic cost associated with the preferred LLM agent decisions. Additionally, the framework incorporates the current public decision variables to impose penalties for deviations from public decisions, ensuring alignment and feasibility.

\section{Detailed Experimental Setup}\label{app:setup}

\subsection{Data Collection}

For collecting historical price data, we adopt Yahoo Finance API \cite{yahoo_finance_api}. We choose to use Perplexity.ai \cite{perplexity} to gather news for LLM input due to its ability to synthesize large volumes of up-to-date information, including news headlines and industry trends.

\subsection{Stock Selection}

For the experiments, we selected 60 NASDAQ stocks spanning 10 different industries, providing comprehensive representation of diverse sectors and ensuring generalizability of portfolio analysis:

\begin{table}[h]
    \centering
    \renewcommand{\arraystretch}{1.2}
    \begin{tabular}{ll}
        \toprule
        \textbf{Industry} & \textbf{Stocks} \\
        \midrule
        Technology & NVDA, AMD, MSFT, AAPL, INTC, PLTR \\
        Consumer Discretionary & TSLA, AMZN, SBUX, TGT, NFLX, MCD \\
        Financials & HOOD, BAC, JPM, MS, V, SCHW \\
        Real Estate & ZG, PLD, WELL, SPG, PSA, EQR \\
        Energy & GEV, XOM, DUK, NEE, EOG, SLB \\
        Healthcare & TEM, UNH, PFE, MRNA, ABBV, MDT \\
        Industrials & CAT, BA, LMT, DE, GD, HON \\
        Materials & PCT, NEM, LIN, APD, FCX, MLM \\
        Communication Services & GOOG, TMUS, META, DIS, VZ, CMCSA \\
        Consumer Staples & COST, PEP, WMT, KO, PG, MO \\
        \bottomrule
    \end{tabular}
    \caption{Stock Tickers by Industry}
    \label{tab:stock_industries}
\end{table}

\subsection{Detailed Backtesting Procedure}

\textbf{Timeframe:} We adopt month-by-month portfolio adjustments and perform model resolution under this timeframe to better capture valuable news and stock price trends. The experiments were carried out over the whole year of 2024, with 12 time periods for each separate month. In our experiments, we use the stock price from the last day as input for the optimization agents and obtain significant news from the entire period for each period.

\textbf{Evaluation Metrics:} Our primary metric is the dynamic tracking of the total portfolio value, with monthly adjustments to the investment weights. To evaluate risk control performance, we monitor the total variance of the portfolio, which aligns with the objective of the optimization model throughout the experiments.

\textbf{Baseline Benchmark Details:} We establish a comprehensive baseline benchmark consisting of portfolio weights derived from a pure optimizer (\textbf{OPT}) and a pure LLM-based approach (\textbf{LLM}), comparing them against a portfolio optimized using the SOLID framework (\textbf{LLM+OPT}). Furthermore, we compute the simple average of the weights from \textbf{LLM} and \textbf{OPT} as another benchmark (\textbf{AVG}). Based on these methods, we enhance the interpretability of LLM-driven decisions by enforcing sparsity, achieved by directing the LLM agent to \textit{aim for sparsity in the final allocation} and setting zero weights for stocks in which it has low confidence. The corresponding sparse versions are denoted as \textbf{LLM\_sparse}, \textbf{LLM\_sparse+OPT}, and \textbf{AVG\_sparse}.

\subsection{LLM Agent Design Details}

We implement a Chain-of-Thought (CoT) style of prompt engineering \citep{wei2022chain}. Although there has not been a standardized CoT pipeline for decision-making tasks, we design a prompting chain consisting of four steps: (1) read news articles and reflect on how they might affect each stock; (2) pass the public information of the current iteration and decision price, prompting the LLM to collaborate with an optimization agent; (3) finalize the recommendation with a confidence level; and (4) make a final decision based on the previous steps.

To ensure consistency, the LLM temperature is set to 0, encouraging the LLM to exploit the solution space around the previously found solutions and make small adaptations.

\subsection{Convergence and Model Comparisons Results}

\begin{figure}[htb!]
\centering
\includegraphics[width=0.7\textwidth]{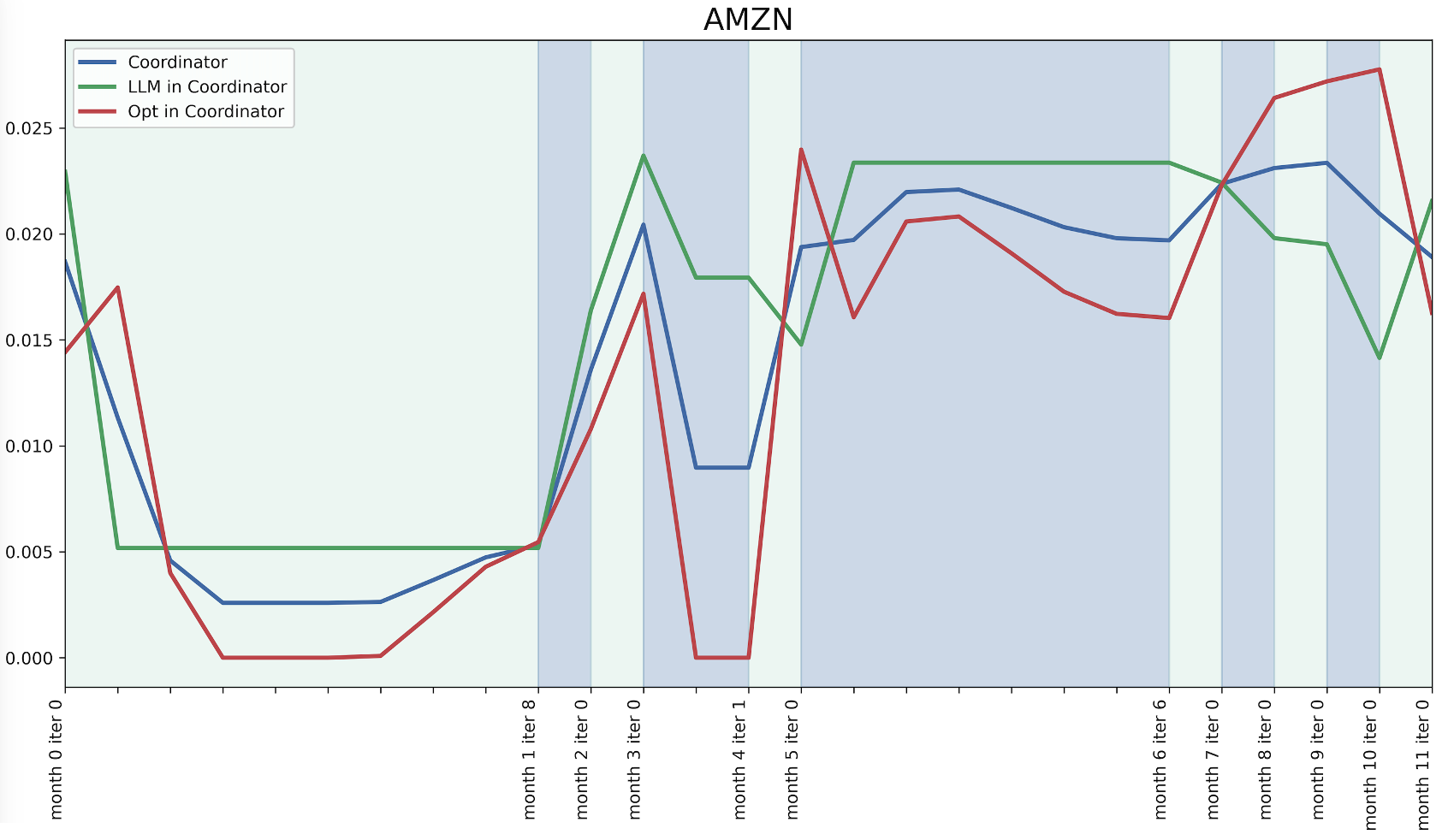}
\caption{The coordination process of optimization and LLM agents in SOLID for exemplary stock - AMZN.}
\label{fig_converge}
\end{figure}

\begin{figure}[htb!]
\centering
\includegraphics[width=0.7\textwidth]{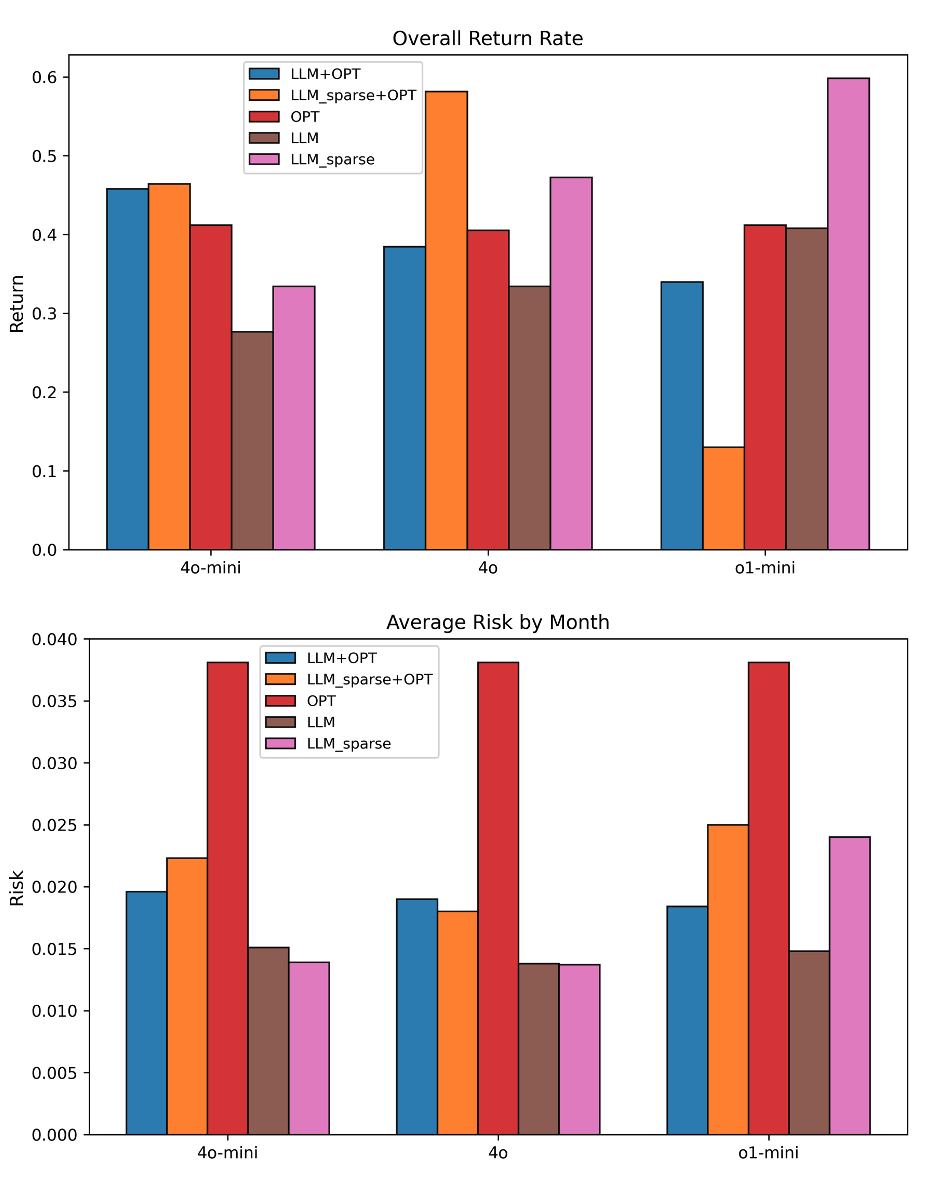}
\caption{Comparisons of overall return rate and average risk by month between ChatGPT 4o-mini, 4o and o1-mini.}
\label{fig_comparison}
\end{figure}

\newpage
\section{Prompt Used for Case Study}\label{prompt_details}

\begin{tcolorbox}[colframe=black, colback=white, coltitle=white, title=Perplexity News Prompt]
Show me key news on [Company X] (I'll provide you the stock ticker) from [Date Range]. This list isn't exhaustive—earnings, leadership changes, regulatory updates, major headlines, M\&A, industry trends, product launches, analyst opinions, investor activism, competitor moves—but only pick what's most representative for [Company X]'s stock performance. If other items seem more important, include them. Summarize in bullet points; avoid complete sentences; aim for maximum information. You don't have to include everything, just the key pieces. Keep final summary around 400 words.
\end{tcolorbox}

\begin{tcolorbox}[colframe=black, colback=white, coltitle=white, title=System Prompt]
You are collaborating with an optimization model to develop an optimal portfolio strategy. The optimization model is an expert in numerical calculations and historical trend analysis, ensuring precision in minimizing risk and achieving target returns. Your role is to bring contextual understanding, strategic insights, and adaptability to the process.

While you are encouraged to bring your unique perspective, remember:

1. The optimization model has consistently outperformed by a significant margin (10-20\%). Therefore, where substantial differences arise, you must carefully review the optimizer's proposal and adjust your recommendations unless you have a strong, data-backed reason to maintain your position.

2. In this collaboration, flexibility and compromise are critical. Adapt your strategy when the optimizer provides strong evidence, as rigid decision-making has been shown to lead to suboptimal results.

3. Your goal is to work synergistically with the optimizer to maximize overall portfolio performance. This requires understanding and integrating the optimizer's proposals to improve alignment.

4. If you disagree, explicitly identify the key aspects of the optimizer's proposal that you are willing to accept or adjust. Never reject the optimizer's strategy outright—focus on making incremental refinements.
\end{tcolorbox}

\begin{tcolorbox}[colframe=black, colback=white, coltitle=white, title=Prompt]

Recent reports indicate:

NVIDIA is a leading American semiconductor company specializing in the design and manufacture of high-end graphics processing units (GPUs). Founded in 1993 and headquartered in Santa Clara, California, NVIDIA has become a dominant force in the GPU market, holding approximately 80\% of the global market share as of 2023. The company's GPUs are widely used in gaming, professional visualization, data centers, and artificial intelligence applications. NVIDIA has positioned itself as a key player in the AI industry, with its GPUs powering many AI and machine learning platforms, including OpenAI's ChatGPT. The company's focus on innovation and rapid product development cycles has contributed to its strong market position and growth in recent years.

...

Please read the following information carefully.

---
**Stock News**

news for NVDA:
Here are the key points related to NVIDIA (NVDA) in January 2024:

\#\# Stock Performance
- NVIDIA's stock price in January 2024 saw a significant increase, closing at \$61.51 on January 31, 2024, which was up 24.94\% for the month[1][5].

\#\# Financials and Earnings
- Although the specific earnings report for Q4 FY24 was released in February 2024, the fiscal year 2024 performance was already indicative of strong growth. However, the detailed earnings report was not available in January 2024 itself[4].

\#\# Market and Industry Trends
- The surge in NVIDIA's stock was largely driven by the booming demand for graphics processing units (GPUs) due to their critical role in the generative artificial intelligence era[3].

\#\# Product and Technology Updates
- The company had introduced several innovations in the preceding and following months, such as the GeForce RTX™ 40 SUPER Series GPUs and generative AI capabilities for its installed base of RTX AI PCs, though these were not confined to January 2024[4].

\#\# General Market Sentiment
- The overall sentiment around NVIDIA in January 2024 was positive, reflecting the company's strong position in the AI and GPU markets, despite some later concerns about delays and other issues that emerged in subsequent months[2][3].

---
**Recent Stock Prices**

The stock prices today are: ...

You are a trader responsible for making portfolio allocation decisions. Use all relevant information provided (such as news and stock data) to decide how much to invest in each stock.

Think about:

1. Any news articles and how they might affect each stock.

2. Previous decisions you have made regarding portfolio weights.

Also, here is the decision-price of your plan thus far: [...]. A higher decision-price means you should adjust your plan to be higher. And a negative decision-price means you should adjust your plan to be smaller.

\#\#\# Task
1. Carefully evaluate the optimizer's proposed portfolio weights and explain your reasoning for agreement or disagreement. When in doubt, lean towards collaboration by adjusting your recommendations closer to the optimizer's.
2. Finalize your recommendation in the following format: [Ticker: Confidence Level]:
   - Very Low Confidence
   - Low Confidence
   - Somewhat Low Confidence
   - Neutral
   - Somewhat High Confidence
   - High Confidence
   - Very High Confidence

3. Conclude by summarizing how your proposal aligns with the optimizer's and why it contributes to achieving the collective goals.
Even if you are unsure, you **must** provide the best decision you can based on the available information.

Take a deep breath and work on this problem step-by-step.

\#\#\# Response Format
After your explanation, please write your final recommendation in a single line, in the format below:

NVDA: X1, AMD: X2, ...

Replace X1, X2, ... with the confidence level for each stock.

Explicitly end your response in that format. So make sure you have these stocks and confidence levels clearly written out to be parsed by a regex function.

Remember, collaboration, adaptability, and performance are key to success.

\end{tcolorbox}

\end{document}